\definecolor{linkblue}{rgb}{0.1,0.1,0.8}
\newcommand{\assign}{\leftarrow}
\newtheorem{theorem}{Theorem}
\newtheorem{lemma}[theorem]{Lemma}
\newtheorem{remark}[theorem]{Remark}
\newcommand{\R}{\mathbb{R}}
\newcommand{\A}{\mathcal{A}}
\newcommand{\C}{\mathcal{C}}
\newcommand{\F}{\mathcal{F}}
\newcommand{\I}{\mathcal{I}}
\newcommand{\calE}{\mathcal{E}}
\renewcommand{\epsilon}{\varepsilon}
\newcommand{\eps}{\varepsilon}
\DeclareMathOperator{\E}{E}
\DeclareMathOperator{\dist}{dist}
\newcommand{\oea}{$(1 + 1)$~EA\xspace}
\newcommand{\ga}{$(1 + (\lambda,\lambda))$~GA\xspace}
\newcommand{\onemax}{\textsc{OneMax}\xspace}
\newcommand{\OM}{\textsc{Om}\xspace}
\newcommand{\OMP}{\textsc{Om}\xspace}
\newcommand{\xopt}{\ensuremath{x_{\text{opt}}}}
\newcommand{\jump}{\textsc{Jump}\xspace}
\begin{document}

\title{Introducing Elitist Black-Box Models: When Does Elitist Selection Weaken the Performance of Evolutionary Algorithms?}
\author{\bf{Carola Doerr}\footnote{CNRS and Sorbonne Universit\'es, UPMC Univ Paris 06, CNRS, LIP6 UMR 7606, 4 place Jussieu, 75005 Paris, France}, \qquad\bf{Johannes Lengler}\footnote{ETH Z\"urich, Institute for Theoretical Computer Science, 8092 Z\"urich, Switzerland}}
\maketitle

%

\maketitle

\begin{abstract}

Black-box complexity theory provides lower bounds for the runtime of black-box optimizers like evolutionary algorithms and serves as an inspiration for the design of new genetic algorithms. Several black-box models covering different classes of algorithms exist, each highlighting a different aspect of the algorithms under considerations. In this work we add to the existing black-box notions a new \emph{elitist black-box model}, in which algorithms are required to base all decisions solely on (a fixed number of) the best search points sampled so far. Our model combines features of the ranking-based and the memory-restricted black-box models with elitist selection. 

We provide several examples for which the elitist black-box complexity is exponentially larger than that the respective complexities in all previous black-box models, thus showing that the elitist black-box complexity can be much closer to the runtime of typical evolutionary algorithms. 

We also introduce the concept of $p$-Monte Carlo black-box complexity, which measures the time it takes to optimize a problem with failure probability at most $p$. Even for small~$p$, the $p$-Monte Carlo black-box complexity of a function class $\F$ can be smaller by an exponential factor than its typically regarded Las Vegas complexity (which measures the \emph{expected} time it takes to optimize $\F$).

\end{abstract}


\sloppy{
\section{Introduction}
\label{sec:Intro}

Black-box models are classes of algorithms that are designed to help us understand how efficient commonly used search strategies like evolutionary algorithms (EAs) and other randomized search heuristics (RSHs) are. 
Several models exist, each designed to analyze a different aspect of search heuristics. For example, the memory-restricted model~\citep{DrosteJW06,DoerrW14memory} helps us to understand the influence of the population size on the efficiency of the search strategy, while the ranking-based black-box model~\citep{DoerrW14ranking,TeytaudG06,FournierT11} analyzes how much a heuristic loses by not using absolute but merely relative fitness values.

Having been introduced to the evolutionary computation community in~\citep{DrosteJTW03, DrosteJW06}, black-box complexity is a young but highly active area of current research efforts~\citep{TeytaudG06,AnilW09,FournierT11,ABB,DoerrJKLWW11,LehreW12,DoerrKLW13,DoerrW14memory,DoerrW14ranking,DoerrW14arity,DoerrDK14jump,DoerrDK14,ParallelBBC14,ParallelBBC15,Jansen15}. 
The insights from black-box complexity studies can be used to design more efficient genetic algorithms, as the recent \ga from~\citep{DoerrDE15} shows. 

We contribute to the existing literature a new model, which we call the \emph{elitist black-box model}. As the name suggests, our model is designed to analyze the effect of elitist selection rules on the performance of search heuristics. We do so by enforcing that the algorithms in this model maintain a population that contains only the $\mu$ best so-far sampled individuals. Here, $\mu$ is a parameter of the model (called the \emph{memory-} or \emph{population-size}), while quality is measured according to increasing fitness values. Note that for population size $\mu >1$, the $\mu$ best so-far sampled search points may have different fitness values. On the other hand, if more than $\mu$ search points of current-best fitness have been sampled, only $\mu$ of them can be stored in the population. 

In the evolutionary computation (EC) community elitist selection is very common as it can be seen as a literate interpretation of the ``survival of the fittest'' principle in an optimization context. Unfortunately, this term is not used consistently in the EC literature. As described above, we call an algorithm elitist if and only if the next generation \emph{consists} of the $\mu$ best so-far search points. This is a commonly used convention in the theory of EA subcommunity. In contrast, other subcommunities call an algorithm elitist if and only if the next generation \emph{contains} one of the best so-far solutions. Another notion defines an algorithm to be elitist if and only if \emph{every} best so-far search point enters the new population, so the next population must be larger than $\mu$ if there are more than $\mu$ search points of current-best fitness. Finally, yet another notion of elitist requires that the new population \emph{only} consists of search point of the current-best fitness value, thus the next population must be smaller than $\mu$ if there are less than $\mu$ best so-far search points. Researchers using such other notions would therefore rather call our elitist black-box model a \emph{black-box model with truncation selection}. 

A short version of this work has been presented at the GECCO conference 2015 in Madrid, Spain~\citep{DoerrL15Model}.

\subsection{Previous Work}

In contrast to classical complexity notions discussed in the computer science literature, black-box complexity focuses on so-called black-box optimizers, i.e., algorithms that do not have access to the function at hand other than by evaluating possible solutions, which are referred to as \emph{search points}. 
If $\C$ is a class of black-box optimizers, then the $\C$-black-box complexity of a class $\F$ of functions is measured by the expected number of function evaluations that are needed by a best possible algorithm in $\C$ to optimize any instance $f \in \F$ (formal definitions will be given in Section~\ref{sec:model}). This number is a lower bound for the efficiency of any algorithm $\A \in \C$ and thus helps us understand how adequate an algorithmic choice is for the given problem class $\F$. 

Among the most important algorithmic choices in the design of evolutionary algorithms are the population size, the sampling strategies (often called variation operators), and the selection rules.
Existing black-box models cover these aspects in the following way. 
While the \emph{memory-restricted} model~\citep{DrosteJW06,DoerrW14memory} and the \emph{parallel} black-box model~\citep{ParallelBBC14,ParallelBBC15} analyze the influence of the population-size, 
the \emph{unbiased} model~\citep{LehreW12,ABB,DoerrJKLWW11,DoerrKLW13} considers the efficiency of search strategies using only so-called unbiased variation operators. The influence of the selection rules have been analyzed in the \emph{comparison-based} and \emph{ranking-based} black-box model~\citep{TeytaudG06,FournierT11,DoerrW14ranking}, with a focus on not revealing full fitness information to the algorithm but rather the comparison or the ranking of search points. 
The idea behind these models is that, in contrast to other search strategies like the physics-inspired simulated annealing, many evolutionary algorithms base their selection solely on \emph{relative} and not on \emph{absolute} fitness values. By providing only relative fitness values, the models aim at understanding how this worsens the performance of the algorithms, and indeed it can be shown that for some function classes the ranking-based and the comparison-based black-box complexities are larger than the unrestricted ones.

While the comparison-based and the ranking-based models provide only relative fitness values, they do not require the algorithms to always select the better ones. Search strategies that adhere to this selection rule are called \emph{elitist} algorithms in the evolutionary computation literature. Many common and widely applied black-box optimization strategies like $(\mu +\lambda)$ EAs as well as local hill climbers such as Randomized Local Search (RLS) are of this type. On the other hand, many practical algorithms intentionally keep suboptimal solutions to enhance population diversity, or to better explore the search space~\citep{Ursem02, CrepinsekLM13}. It has been shown that in some situations, specific elitist algorithms like RLS or the $(\mu+1)$ EA are inferior to non-elitist algorithms~\citep{FriedrichOSW09,JagerskupperS07,OlivetoZ15}. In this paper, we go one step further and investigate the performance of \emph{all} elitist algorithms simultaneously.

As mentioned before, algorithms in the ranking-based and comparison-based models do not need to be elitist. \mbox{(Ab-)Using} this, algorithms can be designed in both models that have much smaller runtimes than typical EAs~\citep{DoerrW14ranking, DoerrW14memory}. 
We will see that such algorithms can crucially profit from eventually giving preference to search points of fitness inferior to that of the current best search points. 

\subsection{Our Model, New Complexity Measures, and Results}
\label{sec:ourmodel}

We provide in this work a model to analyze the impact of elitist selection on the runtime of black-box optimizers. In this \emph{elitist black-box model} the population of the algorithms may contain only search points of best-so-far fitness values. That is, if the population size is $\mu$, then at any point in time only the $\mu$ best-so-far search points (of possibly different fitness values) are allowed to be kept in the population. Ties may be broken arbitrarily. For example, if more than $\mu$ search points of current-best fitness have been sampled, only (an arbitrary selection of) $\mu$ of them can be stored in the population. All other previously sampled search points are not allowed to influence the behavior of the algorithm any more. 

We show (Section~\ref{sec:elitistlarge}) that already for quite simple function classes there can be an exponential gap between the efficiency of elitist and non-elitist black-box algorithms. As we shall see in Section~\ref{sec:elitistlarge} this even remains true if we regard (1+1) memory-restricted unary unbiased comparison-based algorithms, which constitutes the most restrictive combination of the existing black-box models. We will see that such algorithms can crucially profit from eventually giving preference to search points of fitness inferior to that of the current best search points. 
We also show (Section~\ref{sec:jump}) that some shortcomings of previous models can be eliminated when they are combined with an elitist selection requirement. More precisely we show that the elitist unary unbiased black-box complexity of $\jump_k$ is of order $\Omega(\binom{n}{k+1})$ and thus non-polynomial for $k=\omega(1)$. In contrast, the unary unbiased black-box complexity of $\jump_k$ is known to be polynomial even for extreme values of~$k$~\citep{DoerrDK14jump}.

In previous models, the black-box complexity has been defined in a \emph{Las Vegas} manner, that is, it measures the expected number of function evaluations until the algorithm hits the optimum. On the other hand, many results in the black-box complexity literature are based on algorithms that with high (or constant) probability find the optimum after a certain number of steps, and then \emph{random restarts} are used to bound the expected runtime.
In (the strict version of) the elitist model, algorithms are not allowed to do random restarts since new search points can be kept in the population only if they are among the $\mu$ best ones sampled so far. Since this is a rather artificial problem (many real-world optimization routines make use of restarts), we introduce in this work the concept of \emph{Monte Carlo black-box complexities}. Roughly speaking, the $p$-Monte Carlo black-box runtime of a black-box algorithm $A$ on a function $f$ is the minimal number of queries $A$ needs in order to find the optimum of $f$ with probability at least $1-p$. The complexity class is then derived in the usual way, cf.\! Section~\ref{sec:lasvegas}. We regard in our work both Monte Carlo complexities and standard (i.e., \emph{Las Vegas}) complexities. For elitist black-box algorithms these two notions can differ substantially as we shall see in Section~\ref{sec:onemaxdouble}. 

In the following we consider only discrete search spaces, and even more restrictively, only pseudo-Boolean functions $f:\{0,1\}^n \rightarrow \R$. However, generalizations to non-finite or continuous search spaces are straightforward.

\section{The Elitist Black-Box Model}
\label{sec:model}

The \emph{elitist black-box model} covers all algorithms that follow the pseudo-code in Algorithm~\ref{alg:elitist}. 
To describe it in a more detailed fashion, a $(\mu+\lambda)$ \emph{elitist black-box algorithm} is initialized by sampling $\mu$ search points. We allow these search points to be sampled \emph{adaptively}, that is, the $i$-th sample may depend on the ranking of the first $i-1$ search points, where, obviously, by ranking we regard the ranking induced by the fitness function $f$.\footnote{Two search points have the same rank if and only if they have the same fitness. The search points of $X$ with maximal $f$-values are rank one, the ones with second largest $f$-values are rank two etc.}  
In each subsequent round a $(\mu+\lambda)$ elitist black-box algorithm samples $\lambda$ new search points from distributions that depend only on the current population $X$ and the ranking of $X$. Note that in such an optimization step the offsprings do not need to be independent of each other. Assume, for example, that we create an offspring $x$ by random crossover, i.e., we take some parents from the current population and set the entries of $x$ by choosing (in an arbitrary way) some bit values from these parents; then it is allowed to also create another offspring $y$ from these parents whose entries $y_i$ in those positions $i$ in which the parents do not agree equal $1-x_i$. These two offsprings are obviously not independent of each other.
However, we do require that the offsprings are created \emph{before} any evaluation of the offsprings happens. That is, the $k$-th offspring may \emph{not} depend on the ranking or fitness of the first $k-1$ offsprings. (We have decided for this version as we feel that it best captures the spirit of EAs such as the $(\mu+\lambda)$ EA that can process the $\lambda$ offsprings in parallel.) 
When all $\lambda$ search points have been generated, the algorithm proceeds to the selection step. In this step, the algorithm sorts the $\mu+\lambda$ search points according to their fitness ranking, where it is free to break ties in any way. Then the new population consists of the $\mu$ best search points according to this ordering.

\begin{algorithm2e}
 \textbf{Initialization:} \\
 \Indp
 $X \assign \emptyset$\;
 \For{$i=1,\ldots,\mu$}{
 Depending only on the multiset $X$ and the ranking $\rho(X,f)$ of $X$ induced by $f$, choose a probability distribution $p^{(i)}$ over $\{0,1\}^n$ and sample $x^{(i)}$ according to $p^{(i)}$\;
 $X \assign X \cup \{ x^{(i)}\}$\;
 }
 \Indm
 \textbf{Optimization:}	
 \For{$t=1,2,3,\ldots$}{
 		Depending only on the multiset $X$ and the ranking $\rho(X,f)$ of $X$ induced by $f$
	\label{line:mut}	choose a probability distribution $p^{(t)}$ on $(\{0,1\}^n)_{i=1}^{\lambda}$ and 
		sample $(y^{(1)},\ldots,y^{(\lambda)}) $ according to $p^{(t)}$\;
		Set $X \assign X \cup \{y^{(1)},\ldots,y^{(\lambda)}\}$\;
  \lFor{$i=1,\ldots, \lambda$}{
  	\label{line:selection} Select $x \in \arg\min X$ and update $X \assign X \setminus \{x\}$\;}
	 }
 \caption{The $(\mu+\lambda)$ elitist black-box algorithm for maximizing an unknown function $f:\{0,1\}^n \rightarrow \R$}
\label{alg:elitist}
\end{algorithm2e}

The elitist black-box model covers many common EAs such as $(\mu+\lambda)$~EAs, Randomized Local Search (RLS), and other hill climbers. It does not cover algorithms with non-elitist selection rules like tournament or fitness-proportional selection. 

Several extensions and variants of the model are possible, including in particular one in which the $\mu$ first search points cannot be sampled adaptively, where the selection has to be unbiased among search points of the same rank, where only offsprings can be selected (comma strategies), or a non-ranking-based version in which absolute instead of relative fitness information is provided. Note that the latter would allow for fitness-dependent mutation rates, which are excluded by the variant analyzed here. The lower bounds presented in Sections~\ref{sec:elitistlarge} and~\ref{sec:jump} actually hold for this non-ranking-based model (and are thus even more powerful than such only applicable to the model described in Algorithm~\ref{alg:elitist}).
The model can certainly also be extended to an unbiased elitist one, in which the distribution $p^{(t)}$ in line~\ref{line:mut} of Algorithm~\ref{alg:elitist} has to be unbiased in the sense of Lehre and Witt~\citep{LehreW12}. See Section~\ref{sec:jump} for results on the unbiased elitist model.

Note that elitist black-box algorithms covered by Algorithm~\ref{alg:elitist} are \emph{memory-restricted} in the sense of~\citep{DrosteJW06, DoerrW14memory}, that is, they cannot store any other information than the current population and its ranking. All information about previous search points (e.g., their number) has to be discarded. 
The (1+1) version of the elitist model is \emph{comparison-based} (that is, a query reveals only if an offspring has worse, equal, or better fitness than its parent), while the $(\mu+\lambda)$ versions are \emph{ranking-based} in the sense of~\citep{DoerrW14ranking}. This means that the algorithm has no information about absolute fitness values, but it knows how the fitness values of the $(\mu+\lambda)$ search points compare to each other. 
To stress the difference between the latter two models, we remark the following: For $\mu>1$ or $\lambda>1$ the ranking-based black-box model provides more information than the comparison-based one as it gives a full ranking of all current search points, while in the comparison-based we always have to select two search points which are compared against each other. The ranking-based black-box complexity can thus be smaller by a logarithmic factor than the comparison-based complexity.\footnote{Binary search shows that the gap between the two notions cannot be larger than a logarithmic factor, but on the other hand this logarithmic gap occurs already for the simple \onemax problem. While it's ranking-based black-box complexity is only $\Theta(n/\! \log n)$~\citep{DoerrW14ranking}, its comparison-based complexity is $\Theta(n)$ by a straightforward application of Yao's Principle.} However, in the memory-restricted black-box models (with constant $\mu$ and $\lambda$), the ranking-based and the comparison-based black-box complexities are asymptotically equal.

\subsection{Monte Carlo vs. Las Vegas Black-Box Complexities}
\label{sec:lasvegas}

As discussed in Section~\ref{sec:ourmodel}, usually the black-box complexity of a function class $\F$ is defined in a \emph{Las Vegas} manner (measuring the \emph{expected number} of function evaluations), while in the case of elitist black-box complexity we also introduce a $p$-\emph{Monte Carlo black-box complexity}, where we allow some failure probability $p$ (see below for formal definitions). If we make a statement about the Monte Carlo complexity without specifying $p$, then we mean that for \emph{every constant} $p>0$ the statement holds for the $p$-Monte Carlo complexity. However, we sometimes also regard $p$-Monte Carlo complexities for non-constant $p=p(n) = o(1)$, thus yielding high probability statements.


For most black-box complexities, the Las Vegas and the Monte Carlo notions are closely related: every Las Vegas algorithm is also (up to a factor of $1/p$ in the runtime) a $p$-Monte Carlo algorithm by Markov's inequality, and a Monte Carlo algorithm can be turned into a Las Vegas algorithm by restarting the algorithm until the optimum is found. 
In particular, if restarts are allowed then Las Vegas and Monte Carlo complexities differ by at most a constant factor. This has been made explicit in~\citep[Remark 2]{DoerrDK14} and is heavily used there as well as in a number of other results on black-box complexity.
It is not difficult to see that such a reasoning fails for elitist black-box algorithms, as they are not allowed to do arbitrary restarts: if the sampled solution intended for a restart is not as good as the ones currently in the memory, it has to be discarded (line~\ref{line:selection} of Algorithm~\ref{alg:elitist}). Las Vegas and Monte Carlo elitist black-box complexities may therefore differ significantly from each other, see Section~\ref{sec:onemaxdouble} for an example with exponentially large gap. 


We come to the formal definition. Let $\F$ be a class of pseudo-Boolean functions, and let $p\in [0,1)$. 
The \emph{Las Vegas complexity} of an algorithm $A$ for $\F$ is the maximum expected number of function evaluations of $f$ before $A$ evaluates an optimal search point for the first time, where the maximum is taken over all $f \in \F$. 
The Las Vegas complexity of $\F$ with respect to a class $\A$ of algorithms is the minimum (``best'') Las Vegas complexity among all $A \in \A$ for $\F$. 
The $p$-Monte Carlo complexity of $\F$ with respect to $\A$ is the minimum number $T$ such that there is an algorithm in $\A$ which has for all $f\in\F$ a probability of at least $1-p$ to find an optimum within the first $T$ function evaluations. 
The \emph{$(\mu + \lambda)$ elitist Las Vegas (elitist $p$-Monte Carlo) black-box complexity} of $\F$ is the Las Vegas ($p$-Monte Carlo) complexity of $\F$ with respect to the class of all $(\mu+\lambda)$ elitist black-box algorithms. 

To ease terminology, we will say that an algorithm \emph{spends time $t$} on a function $f$ if it uses at most $t$ function evaluations on $f$. Moreover, we call the \emph{runtime} of an algorithm $A$ on a function $f$ the random variable describing the number of function evaluations of $A$ until it evaluates for the first time an optimal search point of $f$. In this way, the Las Vegas complexity of $\A$ on $\F$ is the worst-case (over all $f\in F$) expected runtime of the best algorithm $A\in\A$. 

If we are interested in the asymptotic $p$-Monte Carlo complexity of an algorithm $A$ on a function class $\F$ then we will frequently make use of the following observation, which follows from Markov's inequality and the law of total expectation. 
\begin{remark}
\label{rem:1}
Let $p \in (0,1)$. Assume that there is an event $\calE$ of probability $p_{\calE} < p$ such that conditioned on $\neg \calE$ the algorithm $A$ finds the optimum after \emph{expected} time at most $T$. Then the $p$-Monte Carlo complexity of $A$ on $f$ is at most $(1-p_\calE)(p-p_\calE)^{-1}T$. In particular, if $p-p_\calE = \Omega(1)$ then the $p$-Monte Carlo complexity is $O(T)$.
\end{remark}

\begin{proof}
Let $R$ be the expected runtime of $A$ on $f$. 
By Markov's inequality and the law of total expectation we have 
\begin{align*}
\Pr\left[R \geq \frac{1-p_\calE}{p-p_\calE}\cdot T\right]
& \leq 
\Pr\left[R \geq \frac{1-p_\calE}{p-p_\calE}\cdot T \,\middle|\, \neg\calE\right] \cdot\Pr\left[\neg\calE\right] + \Pr\left[\calE\right]\\
& \leq 
\frac{p-p_\calE}{1-p_\calE} (1-p_\calE) + p_\calE
= 
p \hfill.
\end{align*}
~\vspace{-6ex}
\\~\hspace*{\fill}\end{proof}

\subsection{(Non-)Applicability of Yao's Principle}
\label{sec:Yao}

A convenient tool in black-box complexity theory is Yao's Principle. In simple words, Yao's Principle allows to restrict one's attention to bounding the expected runtime $T$ of a best-possible \emph{deterministic} algorithms on a \emph{random} input instead of regarding the best-possible performance of a \emph{random} algorithm on an \emph{arbitrary} input. Analyzing the former is often considerably easier than directly bounding the performance of any possible randomized algorithm. Yao's Principle states that $T$ is a lower bound for the expected performance of a best possible randomized algorithm for the regarded problem. In most applications a very easy distribution on the input can be chosen, often the uniform one.  Formally, Yao's Principle is the following.  

\begin{lemma}[Yao's Principle~\citep{Yao77}]
\label{lem:Yao}
Let $\Pi$ be a problem with a finite set $\mathcal I$ of input instances (of a fixed size) permitting a finite set $\A$ of deterministic algorithms. Let $p$ be a probability distribution over $\mathcal I$ and $q$ be a probability distribution over $\A$. Then, 
\begin{align}\label{eq:Yao}
	\min_{A \in \A} \E[T(I_p, A)] \leq \max_{I \in \mathcal I} \E[T(I,A_q)]\, , 
\end{align}
where $I_p$ denotes a random input chosen from $\mathcal I$ according to $p$, $A_q$ a random algorithm chosen from $\C$ according to $q$ and $T(I,A)$ denotes the runtime of algorithm $A$ on input $I$. 
\end{lemma}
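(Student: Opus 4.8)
The plan is to prove the inequality by a direct averaging argument, without invoking the full von Neumann minimax theorem: only its ``easy'' direction is needed here, and in fact not even that — a one-line double-counting suffices. The key object is the doubly-random expectation $\E[T(I_p,A_q)]$, where the input is drawn from $p$ and the algorithm from $q$ independently. Since both $\mathcal I$ and $\A$ are finite, this quantity equals the finite double sum $\sum_{I\in\mathcal I}\sum_{A\in\A} p(I)\,q(A)\,T(I,A)$, and the minimum and maximum in~\eqref{eq:Yao} are attained.

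First I would bound $\E[T(I_p,A_q)]$ from below by conditioning on the choice of the algorithm and then averaging: $\E[T(I_p,A_q)] = \sum_{A\in\A} q(A)\,\E[T(I_p,A)] \ge \sum_{A\in\A} q(A)\cdot\min_{A'\in\A}\E[T(I_p,A')] = \min_{A'\in\A}\E[T(I_p,A')]$, where the inequality holds termwise and the weights $q(A)$ sum to one. Symmetrically, conditioning on the input instead gives the upper bound $\E[T(I_p,A_q)] = \sum_{I\in\mathcal I} p(I)\,\E[T(I,A_q)] \le \sum_{I\in\mathcal I} p(I)\cdot\max_{I'\in\mathcal I}\E[T(I',A_q)] = \max_{I'\in\mathcal I}\E[T(I',A_q)]$. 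Chaining the two estimates yields $\min_{A\in\A}\E[T(I_p,A)] \le \E[T(I_p,A_q)] \le \max_{I\in\mathcal I}\E[T(I,A_q)]$, which is exactly~\eqref{eq:Yao}.

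There is essentially no hard step. The only points needing a word of care are that $T(I,A)$ is a genuine deterministic number for a deterministic algorithm $A$ on a fixed instance $I$ (so the conditional expectations above are well defined, the randomness in $T(I_p,A)$ coming solely from $I_p$ and that in $T(I,A_q)$ solely from $A_q$), and that the argument remains valid when the right-hand side of~\eqref{eq:Yao} equals $+\infty$ — in which case the bound is vacuous — while otherwise every sum involved is finite. I would expect the write-up to be no more than a few lines.

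It is worth stressing that the mathematical content of Yao's Principle as used in black-box complexity lies not in this inequality itself but in the freedom to choose a convenient hard distribution $p$ over $\mathcal I$ (often the uniform one): a lower bound on $\min_{A\in\A}\E[T(I_p,A)]$, i.e.\ on the best-possible deterministic algorithm against $I_p$, then transfers verbatim to a lower bound on $\max_{I\in\mathcal I}\E[T(I,A_q)]$ for \emph{every} randomized algorithm $A_q$, which is the quantity one actually wants to bound.
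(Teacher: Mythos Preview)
Your proof is correct and is the standard averaging argument for Yao's Principle. Note, however, that the paper does not actually prove this lemma: it is stated with a citation to~\citep{Yao77} and used as a black box, so there is no ``paper's own proof'' to compare against. Your write-up would serve perfectly well as a self-contained justification, and your final paragraph correctly identifies how the lemma is deployed in the paper's lower-bound arguments.
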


It is interesting to note that the informal interpretation of Yao's Principle given above does not apply to elitist algorithms. 
To illustrate this phenomenon, let us consider the (1+1) elitist model, though the argument can be easily extended to population-based elitist algorithms. Let $p$ be the uniform distribution over the instances 
\begin{align}
\label{def:OMz}
\OM_z:\{0,1\}^n \rightarrow \R, x \mapsto n-\sum_{i=1}^n{(x_i \oplus z_i)},
\end{align}
$z \in \{0,1\}^n$, of the well-known \onemax problem (see Section~\ref{sec:elitistlarge} for some background on this problem).
Let $A$ be any deterministic algorithm. 
Then we will show that $A$ has a positive (in fact, fairly large) probability during the optimization of $I_p$ of getting stuck in some search point $x$: as a deterministic (1+1) elitist algorithm there exists a search point~$y=y(x)$ such that whenever the algorithm sees $x$ in the memory it samples $y$ next. If the $\OM_z$-fitness of $y$ is strictly smaller than that of~$x$, offspring~$y$ has to be discarded immediately, in which case the algorithm is in exactly the same situation as before. It can thus never escape from~$x$, and the expected runtime of the algorithm on $\OM_z$ is infinite. It remains to show that this situation happens with positive probability. Assume that the first two search points that $A$ queries are $x$ and $y$. Note that $A$ does not obtain any information from querying $x$, so $y$ is independent of the fitness function. Moreover, we may assume $x \neq y$. Then there are at most $2^{n-1}$ search points $z$ such that $\OM_z(x) = \OM_z(y)$. Moreover, by symmetry (and the uniformity of $p$) half of the at least $2^{n-1}$ remaining search points satisfy $\OM_z(x) < \OM_z(y)$, so $A$ runs into an infinite loop with positive probability. Thus every deterministic (1+1) elitist algorithm has an infinite expected runtime on a uniformly chosen \onemax instance. The lower bound in (\ref{eq:Yao}) is thus infinite, too, suggesting that the elitist black-box complexity of this problem is infinite as well. However, there are simple elitist randomized search strategies that have finite expected runtime on \onemax, for example, RLS and the \oea . 

Why does this example not contradict Yao's Principle? Reading Lemma~\ref{lem:Yao} carefully, we see that it makes a statement only about such randomized algorithms that are a convex combination of deterministic ones. In other words, the randomized algorithms (on a fixed input size) are given by making one random choice at the beginning, determining which of the finitely many deterministic algorithms we apply. For typical classes of algorithms \emph{every} randomized algorithm is such a convex combination of deterministic algorithms (and randomized algorithms are, in fact, often defined this way). In this case Yao's Principle can be summarized in the way we described before Lemma~\ref{lem:Yao}, i.e., as a statement that links the worst-case expected runtime of randomized algorithms with the best expected runtime of deterministic algorithms on random input. The previous paragraph, however, explains that in the elitist black-box model there are randomized algorithms which cannot be expressed as a convex combination of deterministic ones. For this reason, we can never apply Yao's Principle directly to the class of elitist black-box algorithms. Similar considerations hold for other classes of memory-restricted black-box algorithms, but have not been mentioned explicitly in the literature. We are not aware of any other class of algorithms where such an anomaly occurs and find the putative non-applicability of Yao's Principle quite noteworthy.

Due to the problems outlined above, we will often consider in our lower bound proofs a superset $\A'$ of algorithms which contains all elitist ones and which has the property that every randomized algorithm in $\A'$ can be expressed as a convex combination of deterministic ones. A lower bound shown for this broader class trivially applies to all elitist black-box algorithms. Observe in particular that in a class of black-box algorithms where every algorithm knows the number of previous steps, every randomized strategy is a convex combination of deterministic strategies: the algorithm can just flip all coins in advance, and essentially use the $i$-th random bit (or bit string) in the $i$-th step. Hence, such classes may be used to apply Yao's Principle. Note that the lower bounds obtained in the subsequent sections are still much stronger than those for any of the previous black-box models.
%

\section{Exponential Gaps to Previous Models}
\label{sec:elitistlarge}

We provide some function classes for which the elitist black-box complexity is exponentially larger than their black-box complexities in any of the previously regarded models. In particular, the black-box complexity will still be small in a model in which all algorithms have to be unbiased, memory-restricted with size bound one, and purely comparison-based. This shows that our model strengthens the existing landscape of black-box models considerably. The example will also show that the Las Vegas complexity of a problem can be exponentially larger than its Monte Carlo complexity.

\subsection{Twin Peaks}
\label{sec:twinpeak}
We first describe a type of landscapes for which the elitist black-box complexity is exponentially large. The following theorem captures the intuition that elitist algorithms are very bad if there are several local optima that the algorithm needs to explore in order to determine the best one of them. 
This remains true if we grant the algorithm access the absolute (instead of the relative) fitness values, as we will show in Remark~\ref{rem:twinpeak}.

\begin{theorem}
\label{thm:twinpeak}
Let $\eps>0$. Let $\F$ be a class of functions from $\{0,1\}^n$ to $\R$ such that for every set $\{z_1,z_2\} \subset \{0,1\}^n$ with $z_1 \neq z_2$,
\begin{itemize}
\item there is a function $f_{z_1,z_2} \in \F$ such that $z_1$ is the unique global optimum, and $z_2$ is the unique second-best search point of $f_{z_1,z_2}$;
\item $\F$ also contains the function $f'_{z_1,z_2}$ that is obtained from $f_{z_1,z_2}$ by switching the fitness of $z_1$ and $z_2$. More formally, $f'_{z_1,z_2}$ is defined by $f'_{z_1,z_2}(z_1) = f_{z_1,z_2}(z_2)$, $f'_{z_1,z_2}(z_2) = f_{z_1,z_2}(z_1)$, and $f'_{z_1,z_2}(z) = f_{z_1,z_2}(z)$ for $z \in \{0,1\}^n\setminus\{z_1,z_2\}$.
\end{itemize}
Then the (1+1) elitist Las Vegas black-box complexity and the $(1/2-\eps)$-Monte Carlo black-box complexity of $\F$ are exponential in $n$. 
\end{theorem}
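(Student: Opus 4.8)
The plan is to run a distributional (Yao-style) argument on a carefully chosen finite sub-family of ``twin peak'' instances, being careful — as explained in Section~\ref{sec:Yao} — to apply Yao's Principle only to a suitable superset of the elitist algorithms rather than to the elitist algorithms themselves.

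\emph{Hard distribution and the superset.} For each unordered pair $P=\{z_1,z_2\}$ of distinct points fix one $f_{z_1,z_2}\in\F$ as granted by the hypothesis (optimum $z_1$, unique second-best $z_2$) together with its ``swap'' $f'_{z_1,z_2}\in\F$; the two functions agree on $\{0,1\}^n\setminus P$ and merely exchange the roles of ``optimum'' and ``unique second-best'' between $z_1$ and $z_2$. The hard instance is obtained by picking $P$ uniformly at random and then one of the two associated functions, each with probability $1/2$, so there are $2\binom{2^n}{2}$ instances in the support. I would prove the lower bound for the superset $\A'$ consisting of all $(1+1)$ elitist black-box algorithms that are additionally given the number of elapsed queries; this class contains every $(1+1)$ elitist algorithm, is finite once the total number of queries is bounded, and — a step counter lets one pre-sample all randomness — every randomized algorithm in it is a convex combination of deterministic ones, so Yao's Principle (Lemma~\ref{lem:Yao}) applies to $\A'$. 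A deterministic algorithm in $\A'$ is given by a fixed first query $x_0$ together with maps $y_t$ sending the single memory point to the $t$-th query, and, being elitist, its memory can change only to a strictly better point.

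\emph{Coupling and counting.} Fix such a deterministic algorithm and a pair $P$ with its two instances $g_0=f_{z_1,z_2}$, $g_1=f'_{z_1,z_2}$. Before the first query of a point in $P$ the two instances produce identical fitness comparisons and hence identical runs; and when some $u\in P$ is first queried it beats the current memory point (which lies outside $P$) under both instances, so $u$ enters the memory in both cases. Thus $P$ has a well-defined first-hit element $u_P$ and hitting time $\tau_P$, independent of the chosen orientation. On the orientation in which $u_P$ is the optimum the algorithm already wins at step $\tau_P$; on the other orientation $u_P$ is the unique second best, so the memory freezes at $u_P$ and the algorithm can escape only by querying the true optimum $u'_P:=P\setminus\{u_P\}$ from memory state $u_P$ — that is, within $T$ queries it succeeds on this orientation iff $u'_P\in\{y_t(u_P):t\le T\}$, a set of size at most $T$. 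Summing over pairs, the number of instances solved within $T$ queries is at most $\binom{2^n}{2}$ (one ``free'' orientation per pair) plus the number of pairs $P$ with $u'_P=y_t(u_P)$ for some $t\le T$, which is at most $2^nT$. Hence every deterministic $\A'$-algorithm solves a random hard instance within $T$ queries with probability at most $\tfrac12+T/(2^n-1)$, and so fails with probability at least $\tfrac12-T/(2^n-1)$.

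\emph{Conclusion and main obstacle.} Choosing $T=\lceil\eps 2^{n-2}\rceil$ the failure probability exceeds $1/2-\eps$, so by Yao's Principle no randomized $\A'$-algorithm — hence no $(1+1)$ elitist algorithm — can solve every instance within $T$ queries with probability $1-(1/2-\eps)$; this is the claimed exponential lower bound on the $(1/2-\eps)$-Monte Carlo complexity. Taking instead $T=2^{n-3}$, the same estimate shows every deterministic $\A'$-algorithm fails on at least a quarter of the hard instances within $T$ queries, so its expected runtime averaged over them is at least $T/4$; averaging over the algorithm's coins and passing to the worst instance yields the $\Omega(2^n)$ Las Vegas lower bound (this part does not even use $\eps$). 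The heart of the proof — and the only real obstacle — is the coupling step together with the observation that from any fixed memory state only $\le T$ distinct points are ever probed, so merely $\le 2^nT$ of the $\binom{2^n}{2}$ pairs (an exponentially small fraction) allow the algorithm to escape from the ``wrong'' peak; the rest is bookkeeping with Markov's inequality and Yao.
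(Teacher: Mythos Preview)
Your proof is correct and follows the same overall strategy as the paper: apply Yao's Principle to a suitable superset of the elitist algorithms, use the uniform distribution over pairs $\{z_1,z_2\}$ and orientations, observe that the algorithm gets trapped at the first point of the pair it hits whenever that point is the second-best, and finish with a counting argument.

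The execution, however, is genuinely cleaner than the paper's. The paper enlarges the class by revealing the entire pair $\{z_1,z_2\}$ and granting unlimited memory during a ``first phase'' (until $z_1$ or $z_2$ is queried), then reverts to the strict elitist model in a ``second phase''; this forces a somewhat awkward hybrid argument in which the first phase is deterministic and the second randomized, and the final count goes through an averaging argument on the sizes of the preimages $\C_{z_1}=h_A^{-1}(z_1)$, yielding only $\Omega(2^n/n^2)$ after choosing auxiliary parameters $\alpha=p=1/n$. Your choice of superset---just add a step counter---keeps the algorithm deterministic throughout and lets you identify the first-hit element $u_P$ for every pair by a direct coupling. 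The key combinatorial observation, that from any fixed memory state at most $T$ distinct points are probed in $T$ steps and hence at most $2^nT$ pairs are ``escapable'', replaces the paper's preimage-size argument and delivers the sharper $\Omega(2^n)$ bound with no auxiliary parameters. Both routes exploit the same phenomenon (being stuck at the unique second-best point), but yours packages it more economically.
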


\begin{proof}[Proof of Theorem~\ref{thm:twinpeak}]
To give an intuition, we first give an outline of the proof that is not quite correct. Assume that a black-box algorithm encounters either $f_{z_1,z_2}$ or $f'_{z_1,z_2}$. By definition of $f'_{z_1,z_2}$, it does not know the global optimum before querying either $z_1$ or $z_2$. It thus needs to query either $z_1$ or $z_2$ first. Assume that it queries $z_1$ first. Then if the algorithm is unlucky (if $z_1$ is not the global optimum, i.e., the algorithm optimizes $f'_{z_1,z_2}$), the algorithm is stuck in a local optimum which it cannot leave except by sampling the optimum $z_2$. Due to the memory restriction the algorithm has lost any information about the objective function except possibly that $z_1$ is one of the two best search points. But since $f'_{z_1,z} \in \F$ for all $z \in \{0,1\}^n \setminus \{z_1\}$, the algorithm would then have lost any information about $z_2$, and would still have test $2^n-1$ possible optima.
  
Unfortunately, this intuitive argument fails: after querying $z_1$ the algorithm does have some information about $z_2$, despite the severely restricted memory. For example, consider an algorithm $A_{z_1,z_2}$ that only queries $z_1$ if either it has already tested the second-best search point, or if it has identified the function to be $f_{z_1,z_2}$ or $f'_{z_1,z_2}$. In particular, for every $z \neq z_2$, in order to optimize $f_{z_1,z}$ or $f'_{z_1,z}$ the algorithm queries $z$ before querying $z_1$. Then $A_{z_1,z_2}$ never takes $z_1$ into memory, unless either $z_1$ is the optimum or $z_2$ is the optimum. In particular, if $A_{z_1,z_2}$ sees $z_1$ in its memory, and $z_1$ is not optimal, then it can query the global optimum in the next step. So an algorithm can draw information from the order in which it queries $z_1$ and $z_2$. However, informally this information is limited to one bit. Therefore, the algorithm can not gain much, and the intuitive argument outlined at the beginning still works approximatively. 

To turn this intuition into a formal proof, we employ Yao's Principle (Lemma~\ref{lem:Yao}). As described in Section~\ref{sec:Yao}, we need to consider a larger class $\A$ of algorithms defined as follows. Assume the algorithm has to optimize $f_{z_1,z_2}$ or $f'_{z_1,z_2}$. We call the time until the algorithm queries for the first time $z_1$ or $z_2$ the ``first phase'', while we call the remaining time the ``second phase''. Since we are not too much interested in the time that the algorithm spends in the first phase, we simply give away to the algorithm the set $\{f_{z_1,z_2}, f'_{z_1,z_2}\}$. That is, during the first phase the algorithm knows everything about the objective function except which of the two points $z_1$ or $z_2$ is the optimum. We also give the algorithm access to unlimited memory throughout this phase. During this phase every randomized algorithm is a convex combination of deterministic ones. So we may use Yao's Principle, choose a probability distribution on $\F$, and restrict ourselves to an algorithm $A$ that is deterministic in the first phase. For the probability distribution on $\F$, we choose a set $\{z_1,z_2\}$ of two $n$-bit strings uniformly at random, and then we pick either $f_{z_1,z_2}$ or $f'_{z_1,z_2}$, each with probability $1/2$.

Note that in the first phase the algorithm does not gain any additional information by querying any search point $z \notin \{z_1, z_2\}$ since it can predict the fitness value of $z$ without actually querying it. We may thus assume that the first query of $A$ is either $z_1$ or $z_2$. Let $\C$ be the set of all sets $\{z_1,z_2\}$, where $z_1, z_2\in \{0,1\}^n$ and $z_1 \neq z_2$. In the first phase the algorithm $A$ essentially assigns to each set $\{z_1,z_2\} \in \C$ either $z_1$ or $z_2$. 
Let us denote the corresponding function by $h_A: \C \to \{0,1\}^n$. With probability $1/2$, $h_A(z_1,z_2)$ is the global optimum, and with probability $1/2$ it is not. 

With probability $1/2$ the algorithm enters the second phase, in which we no longer allow it to access anything but the current search point and possibly its fitness. For the sake of exposition, we first consider the case that the algorithm may not access the fitness, and describe afterwards how to change the argument otherwise. The algorithm $A$ can be randomized in this second phase. Recall that the instance is taken uniformly at random, and that $A$ samples $z_1$ whenever $z_2 \in \C_{z_1} := h_A^{-1}(z_1)$. Therefore, conditioned on seeing $z_1$, the global optimum is uniformly distributed in $\C_{z_1}$. The algorithm hence needs an expected number of $\Omega(\C_{z_1})$ additional queries to find $z_2$, and the probability to find the optimum with $\alpha |\C_{z_1}|$ additional queries is at most $\alpha$.

It remains to show that $\C_{z_1}$ is large with high probability. Let $p>0$. Note that the average size over all $z_1$ (not the expectation over all instances!) of $\C_{z_1}$ is $E:= |\C|/2^n = (2^n -1)/2$. Let $D := \{z_1 \in \{0,1\}^n \mid |\C_{z_1}| \leq pE\}$. 
Then $|h_A^{-1}(D)| \leq 2^n pE \leq p|\C|$. 
Since the random instance is chosen uniformly at random from $\F$, the set $\{z_1,z_2\} \in \C$ is also uniformly at random, and with probability at least $1-p$ an instance from $\C \setminus h_A^{-1}(D)$ is chosen, and thus $|\C_{z_1}| > pE$. Thus for every $p>0$, conditioned on entering the second phase $|\C_{z_1}| > pE = \Omega(p2^n)$ with probability at least $1-p$. Choosing somewhat arbitrarily $\alpha = p=n^{-1}$ shows that with probability at least $1/2-o(1)$ the algorithm needs at least $\Omega(2^n/n)$ steps. This concludes the proof.
\end{proof}

\begin{remark}\label{rem:twinpeak}
Theorem~\ref{thm:twinpeak} essentially also holds if we allow the algorithms to access absolute fitness values. More precisely, let $\F$ be a class of functions as in~\ref{thm:twinpeak}, and let $V(\F) := \max{\{f(z) \mid f\in F, z\in \{0,1\}^n}, z \text{ is not a global maximum of $f$} \}$ be the set of all second-best fitness values. If $V(\F)$ has subexponential size, then the (1+1) elitist Las Vegas black-box complexity and the $(1/2-\eps)$-Monte Carlo black-box complexity of $\F$ remain exponential even if the algorithms have access to the absolute fitness values.
\end{remark}
\begin{proof}
The same proof as for Theorem~\ref{thm:twinpeak} still works, only that for every $a\in V(\F)$ we let $\C_{z_1,a} := \{z_2 \in \C_{z_1} \mid f_{z_1,z_2}(z_1) = a\}$. This partitions $\C_{z_1}$ into $V(\F)$ subsets, and since $|V(\F)| = 2^{o(n)}$, on average these sets are still exponentially large. The theorem now follows in the same way as before, with the sets $\C_{z_1}$ replaced by $\C_{z_1,f(z_1)}$.
\end{proof}

\paragraph{The Double OneMax Problem}
\label{sec:onemaxdouble}

Theorem~\ref{thm:twinpeak} provides us with landscapes that are very hard for elitist algorithms. We now give a more concrete example, the class of \emph{double \onemax functions}. This class is of the type as described in Theorem~\ref{thm:twinpeak}, but at the same time it is easy for a very simple non-elitist algorithm, namely a variant of RLS using restarts, cf.~Algorithm~\ref{alg:RLSrestart}. The basis for double \onemax functions is \onemax, one of the best studied example functions in the theory of evolutionary computation. The original \onemax function simply counts the number of ones in a bitstring. Maximizing \onemax thus corresponds to finding the all-ones string. 

Search heuristics are typically invariant with respect to problem encoding, and as such they have the same expected runtime for any function from the generalized \onemax function class $\onemax:=\left\{\OM_z \mid z \in \{0,1\}^n \right\}$,
where $\OM_z$ is defined by~\eqref{def:OMz}. We call $z$, the unique global optimum of function $\OM_z$, the \emph{target string} of $\OM_z$. 

A very simple heuristic optimizing \onemax in $\Theta(n \log n)$ steps is \emph{Randomized Local Search} (RLS). Since a variant of RLS will be used in our subsequent proofs, we give its pseudo-code in Algorithm~\ref{alg:RLS}. RLS is initialized with a uniform sample $x$. In each iteration one bit position $j \in [n]$ is chosen uniformly at random. The $j$-th bit of $x$ is flipped and the fitness of the resulting search point $y$ is evaluated. The better of the two search points $x$ and $y$ is kept for future iterations (favoring the newly created individual in case of ties). As is easily verified, RLS is a unary unbiased (1+1) elitist black-box algorithm.

\begin{algorithm2e}%
	\textbf{Initialization:} Sample $x \in \{0,1\}^n$ uniformly at random and query $f(x)$\;
 \textbf{Optimization:}
\For{$t=1,2,3,\ldots$}{
Choose $j \in [n]$ uniformly at random\;
Set $y\assign x\oplus e^n_{j}$ and query $f(y)$\,; //mutation step\\
\lIf{$f(y)\geq f(x)$}{$x \assign y$\,; //selection step}
}
\caption{Randomized Local Search for maximizing~$f\colon\{0,1\}^n\to\mathbb{R}$.}
\label{alg:RLS}
\end{algorithm2e}

We are now ready to define the double onemax functions. For two different strings $z_1,z_2 \in \{0,1\}^n$, let
\begin{align*}
\OM_{z_1,z_2}(x):= 
\begin{cases}
\max\{ \OM_{z_1}(x), \OM_{z_2}(x) \}, &\text{ if } x\neq  z_1,\\
n+1, &\text{ otherwise.}
\end{cases}
\end{align*}
The unique global optimum of this function is $z_1$, and $z_2$ is the unique second best search point. For all $x \notin\{z_1,z_2\}$ the fitness $\OM_{z_1,z_2}(x)$ equals $\OM_{z_2,z_1}(x)$. Unless the algorithm queries either $z_1$ or $z_2$ it can therefore not distinguish between the two functions. We consider the class of functions $\F:=\{ \OM_{z_1,z_2} \mid  z_1,z_2 \in \{0,1\}^n, z_1 \neq z_2\}$ 
and show the following.

\begin{theorem}
\label{thm:doubleOM}
Let $\eps>0$.
The (1+1) elitist $(1/2+\eps)$-Monte Carlo black-box complexity of $\F$ 
and its unary unbiased, (1+1)-memory restricted, comparison-based black-box complexity is $O(n\log n)$, while the (1+1) elitist Las Vegas black-box complexity of $\F$ and its $(1/2-\eps)$-Monte Carlo black-box complexity are exponential in $n$ even if we allow the algorithms to access absolute fitness values.
\end{theorem}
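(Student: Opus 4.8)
The two exponential lower bounds will follow directly from the machinery already in place. I would first observe that $\F$ fits the template of Theorem~\ref{thm:twinpeak}: take $f_{z_1,z_2}:=\OM_{z_1,z_2}$, whose unique global optimum is $z_1$ (fitness $n+1$) and whose unique second-best point is $z_2$ (fitness $n$), and note that $\OM_{z_2,z_1}\in\F$ is exactly the function $f'_{z_1,z_2}$ obtained by interchanging the fitnesses of $z_1$ and $z_2$. Since the only second-best value occurring anywhere in $\F$ is $n$, the set $V(\F)$ of Remark~\ref{rem:twinpeak} has size at most $n+1$ and is in particular subexponential, so Remark~\ref{rem:twinpeak} immediately yields the claimed exponential lower bounds on the $(1+1)$ elitist Las Vegas and $(1/2-\eps)$-Monte Carlo complexities, even with access to absolute fitness values.

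For the upper bounds the plan is to show that plain Randomized Local Search (Algorithm~\ref{alg:RLS}) --- which is a unary unbiased $(1+1)$ elitist and hence also comparison-based and $(1+1)$-memory-restricted black-box algorithm --- already reaches the global optimum $z_1$ in $O(n\log n)$ steps with probability essentially $1/2$. The structural fact I would exploit is that $\OM_{z_1,z_2}(x)=n-\min\{\dist(x,z_1),\dist(x,z_2)\}$ for every $x\neq z_1$, so on $\{0,1\}^n\setminus\{z_1,z_2\}$ a one-bit flip is accepted by RLS iff it does not increase $\min\{\dist(x,z_1),\dist(x,z_2)\}$. From this one checks that the only local optima are $z_1$ and $z_2$ (the case $\dist(z_1,z_2)=1$, where $z_1$ is the only local optimum, being even more favourable), and that whenever $\min\{\dist(x,z_1),\dist(x,z_2)\}=m\ge 1$ there are at least $m$ strictly improving one-bit flips; a routine coupon-collector estimate --- the same recurrence as for RLS on a single \OM instance --- then shows that within $Cn\log n$ steps RLS has, with probability $1-o(1)$, queried and thus evaluated the first of $z_1,z_2$ that it reaches.

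The delicate point, and I expect the main obstacle, is to show that this ``first point reached'' is the optimum $z_1$ with probability (essentially) $1/2$ rather than some other constant. For this I would use the isometry $\phi\colon x\mapsto x\oplus(z_1\oplus z_2)$ of the hypercube: it swaps $z_1$ and $z_2$, it exchanges $\dist(\cdot,z_1)$ with $\dist(\cdot,z_2)$ and hence fixes the map $x\mapsto n-\min\{\dist(x,z_1),\dist(x,z_2)\}$, and, being an XOR with a fixed string, it commutes with the one-bit flips of RLS. Crucially, the single-point $+1$ bonus that distinguishes $\OM_{z_1,z_2}$ from this symmetric function cannot affect RLS before it first queries $z_1$ or $z_2$, so the RLS runs started from $x_0$ and from $\phi(x_0)$ are $\phi$-conjugate up to the first hitting time of $\{z_1,z_2\}$; as $x_0$ is uniform and $\phi$ is a bijection, reaching $z_1$ first and reaching $z_2$ first are equiprobable, and (since RLS reaches a local optimum almost surely) each has probability exactly $1/2$. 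Combined with the concentration bound, RLS finds $z_1$ within $Cn\log n$ queries with probability $\ge 1/2-o(1)\ge 1/2-\eps$ for all large $n$, which is the $(1/2+\eps)$-Monte Carlo statement; the finitely many small $n$ are absorbed into the $O$-notation.

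Finally, for the Las Vegas bound in the unary unbiased, $(1+1)$-memory-restricted, comparison-based model --- which, unlike the elitist model, permits the current point to be discarded --- the plan is to use RLS with restarts (Algorithm~\ref{alg:RLSrestart}): run RLS, and every $\Theta(n\log n)$ iterations (using the iteration counter of the memory-restricted model, or, to avoid relying on it, independently in each step with probability $\Theta(1/(n\log n))$) replace the current point by a fresh uniform sample --- a valid unary unbiased memory-restricted operation. Each such epoch starts from an independent uniform point and, by the analysis above, evaluates $z_1$ within the epoch with probability $\ge 1/2-o(1)$; the number of epochs until $z_1$ is first evaluated is stochastically dominated by a geometric variable with constant success probability, and since each epoch costs $O(n\log n)$ queries the expected time to the first evaluation of an optimum is $O(n\log n)$. (As always only queries up to that first optimal evaluation are counted, so it is irrelevant that a comparison-based algorithm cannot recognise that it is already sitting on $z_1$.) Together with the two lower bounds this proves the theorem.
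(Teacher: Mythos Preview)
Your proposal is correct and follows essentially the same route as the paper: invoke Theorem~\ref{thm:twinpeak} (and Remark~\ref{rem:twinpeak} for the absolute-fitness variant) for the lower bounds, and use RLS and RLS with restarts together with the $\phi:x\mapsto x\oplus(z_1\oplus z_2)$ symmetry for the two upper bounds. The paper is terser---it presents only RLS with restarts and remarks that this covers both upper bounds---whereas you explicitly separate plain RLS (which is elitist and gives the $(1/2+\eps)$-Monte Carlo bound) from RLS with restarts (which is not elitist but gives the Las Vegas bound in the less restrictive model); this separation is cleaner and makes the elitist Monte Carlo claim more transparent.
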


\begin{proof}[Proof of Theorem~\ref{thm:doubleOM}]
The class $\F$ satisfies the conditions from Theorem~\ref{thm:twinpeak}, so the lower bound for the (1+1) elitist Las Vegas black-box complexity of $\F$ follows immediately. For the upper bound, consider the random local search algorithm (RLS) with random restarts as given by Algorithm~\ref{alg:RLSrestart}. This algorithm is initialized like RLS. The only difference to RLS (Algorithm~\ref{alg:RLS}) is that during the optimization process, instead of mutating the current best search point, it may restart completely by drawing a point $y$ uniformly at random from $\{0,1\}^n$ and replacing the current best solution $x$ by $y$ regardless of their fitness values. 
We show that this algorithm has expected optimization time $O(n\log n)$. 

Whenever $x \notin \{z_1,z_2\}$ then the one-bit flip has probability at least $n - f(x)$ to increase the fitness of $x$ (this can be proven by an easy case distinction whether or not $\OM_{z_1}(x) \geq \OM_{z_2}(x)$). This is at least as large as the progress probability for \onemax. Therefore, if no restart happens in, say, $5n\log n$ steps (which is true with constant probability) then with high probability RLS finds either $z_1$ or $z_2$ in this time. Note that there search space is 2-vertex transitive, i.e., there is an automorphism of the search space that maps $z_1$ to $z_2$ and vice versa. By definition of $\OM_{z_1,z_2}(x)$, the same automorphism maps $\OM_{z_1,z_2}(x)$ to $\OM_{z_2,z_1}(x)$. Hence, since RLS with restarts is an unbiased algorithm, it will reach $z_1$ before $z_2$ with probability $1/2$, and similarly vice versa. Thus, when the algorithm queries either $z_1$ or $z_2$, then it finds the global optimum with probability $1/2$. Summarizing, after each restart, the algorithm has at least a constant probability to find the global optimum in the next $5n\log n$ steps. This proves both upper bounds in Theorem~\ref{thm:doubleOM}.  

\begin{algorithm2e}[t]%
	\textbf{Initialization:} 
	Sample $x \in \{0,1\}^n$ uniformly at random and query $f(x)$\;
 \textbf{Optimization:}
\For{$t=1,2,3,\ldots$}{
With probability $1/(10 n \log n)$ sample $y \in \{0,1\}^n$ uniformly at random and replace $x$ by $y$\;
\Else{
Choose $j \in [n]$ uniformly at random\;
Set $y\assign x\oplus e^n_{j}$ and query $f(y)$\,; //mutation step\\
\lIf{$f(y)\geq f(x)$}{$x \assign y$\,; //selection step}
}}
\caption{Randomized Local Search with random restarts}
\label{alg:RLSrestart}
\end{algorithm2e}
\end{proof}

\subsection{Hidden Paths}
\label{sec:example2}

We provide another example with an exponential gap between elitist and non-elitist black-box complexities, which gives some more insight into the disadvantage of elitist algorithms. We use essentially the \onemax function, patched with a path of low fitness that leads to the global optimum. In this example, every elitist algorithm fails with high probability to find the optimum in polynomial time, since it is blind to all search points of small fitness value. Both the Monte Carlo and the Las Vegas elitist black box complexity of the problem are exponential in $n$, so that (unlike the example from Section~\ref{sec:onemaxdouble}) the problem cannot be easily mended by allowing restarts. On the other hand, there are memory-restricted, unary unbiased (but not elitist) algorithms that solve the problem efficiently.
 
For $z\in \{0,1\}^n$, let $\bar z$ be the bitwise complement of $z$, i.e., $\bar z_i = 1-z_i$ for all $i \in [n]$. Let further
$
\I_{\ell} = \{\vec i = (i_1,\ldots,i_\ell) \in [n]^{\ell} \mid i_1,\ldots, i_\ell \text{ pairwise distinct}\}.
$ 
To each $\vec i \in \I_{\ell}$ and each $z^0\in \{0,1\}^n$, we associate a path $P(z^0,\vec i) = (z^0,\ldots,z^\ell)$ of length $\ell$ as follows. For $j \in [\ell]$, let $z^j \in \{0,1\}^n$ be the search point obtained from $z^{j-1}$ by flipping the $i_j$-th bit. Note that $z^j$ differs from $z^0$ in exactly $j$ bits.
 
Now we regard the set of all \onemax functions $\OM_z$ padded with a path of length $n/4$ starting at the minimum $z^0:=z^0(z)=\bar{z}$ and leading to the unique global maximum $z^{n/4}$. 
Formally, let $\F:=\{\OMP_{z,\vec i} \mid z \in \{0,1\}^n, \vec i \in \I_{n/4} \}$, where for $\vec i \in \I_{\ell}$ and $P\big(z^0,\vec i\, \big) = (z^0,\ldots,z^\ell)$,
\begin{align*}
\OMP_{z,\vec i}(x) := 
\begin{cases}
n+\OM_{z}(x), &\text{ if } x\not\in P\big(z^0,\vec i\, \big),\\
j, &\text{ if } x= z^j \text{ for } 0 \leq j < \ell,\\
2n+1, &\text{ if } x= z^{\ell}.
\end{cases}
\end{align*}

\begin{theorem}
\label{thm:example2}
The unary unbiased (1+1) memory-restricted black-box complexity of $\F$ is $O(n^2)$, while its (1+1) Monte Carlo (and thus, also Las Vegas) elitist black-box complexity is $2^{\Omega(n)}$, also for the non-ranking-based version of the elitist model in which full (absolute) fitness information is revealed to the algorithm.
\end{theorem}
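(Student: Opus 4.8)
I would prove the two bounds separately.

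\textbf{Upper bound.} The plan is to exhibit one concrete unary unbiased, memory‑restricted $(1+1)$ algorithm that switches behaviour according to the absolute fitness $v=f(x)$ of the current point $x$ and runs in three phases. (i) While $v\in\{n+1,\dots,2n-1\}$ run RLS (flip a uniformly random bit, accept iff $f$ does not decrease); here $x$ is off the path and $x\neq z$, so the one‑bit‑flip improvement probability is at least that of plain $\OM_z$, and after $O(n\log n)$ steps we reach $x=z$, recognisable by $f(x)=2n$. (ii) When $v=2n$, apply the complement operator $x\mapsto\bar x$ — which is unary unbiased — and accept unconditionally (this is the one non‑elitist move), so now $x=\bar z=z^0$. (iii) While $v=f(z^j)\in\{0,\dots,n/4-1\}$, flip a uniformly random bit and accept the offspring $y$ only if $f(y)=v+1$ or $f(y)=2n+1$ (reject if $f(y)=v-1$ or $f(y)\ge n$); a short case distinction shows that from $z^j$ only the flip of bit $i_{j+1}$ is accepted, so this ``forward path walk'' advances by one with probability $1/n$ per step and reaches $z^{n/4}$ after $\sum_{j<n/4}O(n)=O(n^2)$ queries. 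The $O(2^{-n})$ initialization failure modes (starting on the path or at $z^{n/4}$) only help.

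\textbf{Lower bound — setup.} Since Yao's Principle is not directly applicable to elitist algorithms (Section~\ref{sec:Yao}), I would work in the superclass $\A'$ of all $(1+1)$ elitist black‑box algorithms that additionally know the iteration counter; every randomized algorithm in $\A'$ is a convex combination of deterministic ones and $\A'$ contains every $(1+1)$ elitist algorithm, so a lower bound for $\A'$ suffices. Fix a deterministic $A\in\A'$ and let the instance be $\OMP_{z,\vec i}$ with $z\in\{0,1\}^n$ and $\vec i\in\I_{n/4}$ independent and uniform. First structural fact: by elitism $\OM_z(x_t)=f(x_t)-n$ is non‑decreasing as long as the current point $x_t$ stays off the path (i.e.\ $f(x_t)\ge n$), and on a uniform $z$ we have $\OM_z(x_1)>n/4$ with probability $1-2^{-\Omega(n)}$ by Chernoff bounds; so, conditioned on this good event $G$, the algorithm stays forever in the ``high regime'' $\{y:\OM_z(y)>n/4\}$ until it queries the global optimum $z^{n/4}$.

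\textbf{Lower bound — coupling and counting.} Second structural fact: in the high regime every path point $z^j$ with $0\le j<n/4$ has fitness $<n<f(x_t)$, so querying it is rejected and leaves the state $(x_t,f(x_t),t)$ unchanged except for the counter; hence the run of $A$ on $\OMP_{z,\vec i}$ coincides, step for step, with the run of $A$ on the path‑free function $\tilde f_z:=n+\OM_z$ — the ``phantom run'' $\hat y_1(z),\hat y_2(z),\dots$, which depends only on $z$ — up to and including the first step at which $\hat y_t(z)=z^{n/4}$ (the unique point where the two functions force a different acceptance decision, since $\OMP_{z,\vec i}(z^{n/4})=2n+1$). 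Therefore, conditioned on $G$, $A$ finds the optimum within $T$ steps iff $\hat y_t(z)=z^{n/4}$ for some $t\le T$. Conditioning further on $z$, the set $\{i_1,\dots,i_{n/4}\}$ is a uniform $(n/4)$‑subset of $[n]$ independent of $G$, so $z^{n/4}=\bar z\oplus\mathbf 1_{\{i_1,\dots,i_{n/4}\}}$ is uniform over $\binom{n}{n/4}$ points while $\hat y_t(z)$ is fixed; thus $\Pr[\hat y_t(z)=z^{n/4}\mid z]\le\binom{n}{n/4}^{-1}$. A union bound gives $\Pr[A\text{ finds the optimum within }T\text{ steps}]\le\Pr[\neg G]+T\binom{n}{n/4}^{-1}=2^{-\Omega(n)}+T\,2^{-\Omega(n)}$, which is $o(1)$ for $T=2^{cn}$ with a suitable constant $c>0$. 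By the Monte Carlo form of Yao's Principle this carries over to every $(1+1)$ elitist algorithm, so the $(1+1)$ elitist $p$‑Monte Carlo complexity of $\F$ is $2^{\Omega(n)}$ for every constant $p<1$, and the Las Vegas complexity is at least $(1-o(1))\,2^{cn}=2^{\Omega(n)}$ as well.

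\textbf{Main obstacle.} The crux is the second structural fact: I must make precise that it is the memory restriction, not elitism alone, that defeats the algorithm — an elitist algorithm with unbounded memory could probe the path, record the indices $i_1,i_2,\dots$ one at a time and reach $z^{n/4}$ in $O(n^2)$ steps. The argument therefore has to exploit that a memory‑restricted algorithm forgets the outcome of every rejected (in particular every path) query, which is exactly what makes the run on $\OMP_{z,\vec i}$ couplable to the path‑free phantom run. Getting this coupling airtight, together with the bookkeeping around the initialization, the fitness level $n$, and the point $z^0=\bar z$ (which may be queried but whose rejection causes no divergence), is the delicate part; the counting afterwards is routine.
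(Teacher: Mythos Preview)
Your proof is correct, and the upper bound is essentially identical to the paper's. For the lower bound both you and the paper apply Yao's Principle after enlarging the algorithm class, but the enlargements differ: the paper hands the algorithm not only a step counter but also the position of the local optimum $z$ (and even the global optimum if the first query happens to land in the low-fitness region). With $z$ known, the algorithm can predict the fitness of every non-path point, so it is immediate that no query yields information about $\vec i$, and the problem reduces directly to guessing a uniform point among the $\binom{n}{n/4}$ candidates. You enlarge only by a step counter and instead establish the same ``queries depend only on $z$'' conclusion via a coupling to a phantom run on $n+\OM_z$. Your route is a bit more technical (you must verify the coupling survives rejected path queries, which you do), but it proves the bound for a strictly smaller superclass; the paper's route is quicker because giving away $z$ makes the coupling unnecessary.
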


\begin{proof}
For the upper bound, we need to describe a memory-restricted unary unbiased black-box algorithm $A$ that optimizes $f\in F$ in quadratic time. 
The algorithm proceeds as follows. While its current search point has fitness at least $n$, it finds the local optimum $z$ using Randomized Local Search (RLS). This takes expected time $O(n\log n)$. From $z$ it jumps to the starting point $z^0=\bar{z}$ of the path $P\big(z,\vec i\, \big)$. The algorithm now follows the path by using again RLS but accepting an offspring if and only if it increases the parent's fitness by exactly $1$ or if the offspring's fitness is $2n+1$. In particular, in this phase the algorithm rejects any search point with fitness between $n$ and $2n$. Since this algorithm needs time $O(n)$ to advance one step on the path, and the path has length $O(n)$, it has expected runtime $O(n^2)$.

For the lower bound we again extend the class of elitist black-box algorithms to a larger class $\A$ that allows to apply Yao's Principle. After an algorithm in $\A$ has sampled its first search point, we distinguish two cases. If the search point has fitness at most $n+n/4$, then the algorithm may access the position of the global optimum (and thus, terminate in one more step). If the first search point has fitness larger than $n+n/4$, then the algorithm may access the position of the local optimum $z$. Moreover, it may access a counter that tells it how many steps it has performed so far. Apart from that, it may only access (one of) the best search point(s) it has found so far, and its fitness. Then $\A$ is the set of all algorithms that can be implemented with this additional information. In this way, every randomized algorithm in $\A$ is a convex combination of deterministic ones, so that we can apply Yao's Principle. So let $A \in \A$ be a deterministic algorithm, and consider the uniform distribution on $\F$.

If the first search point has fitness $2n+1$ or at most $n+n/4$, then $A$ is done after one query or it can terminate in at most one additional step, respectively. However, by the Chernoff bound these two events happen only with probability $e^{-\Omega(n)}$, so from now on we assume that the first search point has fitness larger than $n+n/4$. Observe that by the accessible information the algorithm can determine the \onemax value $\OM_{z}(x)$ for all $x\in \{0,1\}^n$. In particular, for every search point of larger fitness except for $z^{\ell}$ the algorithm can predict the fitness value without querying it. On the other hand, if it queries a search point of lower fitness, then it is not allowed to keep its fitness value. Thus $A$ cannot obtain additional information about $f$ except by querying the optimum $z^{\ell}$. Since $\F$ was chosen uniformly at random, all search points in distance $\ell = n/4$ from $z^0=\bar{z}$ have the same probability to be the global optimum. Hence, $A$ needs in expectation at least $\binom{n}{n/4}/2 = 2^{\Omega(n)}$ queries to find the optimum.
\end{proof}

\begin{remark}\label{rem:comparison}
A similar statement as the one in Theorem~\ref{thm:example2} holds also for ranking-based algorithms if we slightly increase the memory of the algorithms regarded. 
Indeed, there exists a unary unbiased (2+1) memory-restricted ranking-based algorithm optimizing $\F$ in expected $O(n^2)$ function evaluations. Regard, e.g., the algorithm that maintains throughout the second phase a search point $x^1$ of fitness $n+1$ and that accepts an offspring $y$ of $z^j$ if and only if the fitness of $y$ is larger than that of $z^j$ but smaller than that of $x^1$ (in which case $y = z^{j+1}$). Then $z^{\ell}$ is sampled (but not accepted into the population, see Remark~\ref{rem:comparison2}) after $O(n^2)$ steps.

On the other hand, the (2+1) elitist black-box complexity is still exponential, since with high probability the first two search points the algorithm samples have fitness $n/2+o(n)$. 
\end{remark}

\begin{remark}\label{rem:comparison2}
As indicated in Remark~\ref{rem:comparison} it can make a crucial difference for (non-elitist) black-box algorithms if we only require them to \emph{sample} an optimum or whether we require the algorithm to \emph{accept} it into the population. For example, the algorithm described in Remark~\ref{rem:comparison} does not accept the optimum when finding it. 
\end{remark}


\section{Combining Unbiased and Elitist Black-Box Models}
\label{sec:jump}

In this section we demonstrate that apart from providing more realistic lower bounds for some function classes, the elitist black-box model is also an interesting counterpart to existing black-box models. Indeed, we show that some of the unrealistically low black-box complexities of the unbiased black-box model proposed in~\citep{LehreW12} disappear when elitist selection is required. 

More specifically, we regard the unary unbiased (1+1) elitist black-box complexity\footnote{That is, the complexity with respect to all (1+1) elitist black-box algorithms for which the sampling distributions in line~\ref{line:mut} of Algorithm~\ref{alg:elitist} are unbiased in the sense of Lehre and Witt~\citep{LehreW12}.} of \jump functions, which (in line with~\citep{DoerrDK14jump}) we define in the following way. For a parameter $k$ the function $\jump_k$ assigns to each bit string $x$ the function value $\jump_k(x) = \onemax(x)$ if $\onemax(x) \in \{0\} \cup \{k+1,\ldots,n-k-1\} \cup \{n\}$ and $\jump_k(x) = 0$ otherwise.
Despite the fact that all common search heuristics need $\Omega(n^{k+1})$ fitness evaluations to optimize this function, the unary unbiased black-box complexity of these functions are surprisingly low, see Table~1 for a summary of results presented in~\citep{DoerrDK14jump} (\citep{DoerrKW11} for $k=1$). Interestingly, even for extreme jump functions in which only the fitness value $n/2$ is visible and all other \onemax values are replaced by zero, polynomial-time unary unbiased black-box algorithms exist. It is thus interesting to see that the situation changes dramatically when the algorithms are required to be elitist, as the following theorem shows.

\begin{table*}[t]
\label{tab:jump}
\begin{center}  
\begin{tabular}{c|c|c|c}
  Model & range of $k$ &  unary unbiased & elitist unary unbiased \\
  \hline
Constant Jump &  $1\leq k = \Theta(1)$ & $\Theta(n \log n)$ & $\Theta(n^{k+1})$ \\
  \hline
Short Jump & $k = O(n^{1/2-\eps})$ & $\Theta(n \log n)$ & $\Theta(\binom{n}{k+1}) = \Omega\left((n/k)^k\right)$ \\
\hline
Long Jump & $k = (1/2 - \varepsilon)n$ & $O(n^2)$  & $\Theta(\binom{n}{k+1}) = 2^{\Theta(n)}$  
\\
\hline
Extreme Jump & $k = n/2 - 1$ & $O(n^{9/2})$ & $\Theta(2^n/\sqrt{n})$ \\
\end{tabular}
\end{center}
\caption{Comparison of the unary unbiased black-box complexities of $\jump_k$ with the respective (Las Vegas and Monte Carlo) elitist ones for different regimes of $k$. 
}
\end{table*}


\begin{theorem}
\label{thm:jump}\label{THM:JUMP}
For $k=0$ the (Las Vegas and Monte Carlo) unary unbiased (1+1) elitist black-box complexity of the jump function $\jump_k$ is $\Theta(n \log n)$. 
For all $1 \leq k \leq n/2-1$ it is $\Theta(\binom{n}{k+1})$. 
In particular, for $k= \omega(1)$ the black-box complexity 
is superpolynomial in $n$ and for $k= \Omega(n)$ it is exponential.
\end{theorem}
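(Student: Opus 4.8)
Here the visibility condition $\onemax(x)\in\{0\}\cup\{1,\dots,n-1\}\cup\{n\}$ always holds, so $\jump_0=\onemax$, and the claim is the known bound $\Theta(n\log n)$ for the unary unbiased $(1+1)$ complexity of $\onemax$: RLS (Algorithm~\ref{alg:RLS}) is a unary unbiased $(1+1)$ elitist algorithm that hits the optimum in $\Theta(n\log n)$ steps (also with high probability, hence in the Monte Carlo sense), while the matching $\Omega(n\log n)$ lower bound of~\citep{LehreW12} applies a fortiori because elitist algorithms form a subclass of all unary unbiased $(1+1)$ black‑box algorithms. From now on assume $1\le k\le n/2-1$ and write $z$ for the (unknown) target of the \jump instance.

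\textbf{Upper bound $O(\binom{n}{k+1})$.}
I would analyze the unary unbiased $(1+1)$ elitist algorithm $B$ that starts from a uniform point and in each step generates an offspring $y$ from the current point $x$ by flipping one uniformly random bit with probability $1/2$ and a uniformly random set of exactly $k+1$ bits otherwise, keeping the fitter of the two (ties to $y$). Writing $d:=d(x,z)$, the relevant facts are: (i) the $1$-bit flips drive $\onemax_z(x)$ monotonically up to $n-k-1$, i.e.\ $d$ down to $k+1$, within $O(n\log n)$ steps with high probability (standard multiplicative drift), and the $(k+1)$-bit flips never hurt this phase; (ii) once $d=k+1$, every $1$-bit flip and every non-optimal $(k+1)$-bit flip is either rejected or leaves $d=k+1$ unchanged, so the algorithm stays at Hamming distance $k+1$ from $z$; (iii) from $d=k+1$, a $(k+1)$-bit flip equals $z$ with probability exactly $1/\binom{n}{k+1}$. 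Hence $B$ reaches $z$ after $O(n\log n)$ steps plus a geometric number of steps of mean $2\binom{n}{k+1}$, which is $O(\binom{n}{k+1})$ in expectation and, by a tail bound, at most $C_p\binom{n}{k+1}$ steps with probability $\ge 1-p$ for every constant $p>0$. (A short separate check — relevant only when $k$ is within $O(\sqrt{n\log n})$ of $n/2$ — covers the event that the initial point is invisible: the $1$-bit flips still bring $d$ down to $k+1$ within $O(n\log n)$ steps.)

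\textbf{Lower bound $\Omega(\binom{n}{k+1})$, main case.}
As in Section~\ref{sec:Yao} I would pass to the superclass $\A'$ of unary unbiased $(1+1)$ elitist algorithms equipped with a step counter (and allowed to fix all randomness in advance), so that every algorithm in $\A'$ is a convex combination of deterministic ones and Yao's Principle (Lemma~\ref{lem:Yao}) becomes applicable; since unbiasedness makes the complexity of $\jump_k$ equal to that of the orbit $\{\jump_k^{(z)}\mid z\in\{0,1\}^n\}$, it suffices to fix a deterministic $A\in\A'$ and bound its runtime on $\jump_k^{(z)}$ for $z$ uniform. The core estimate is that, conditioned on the query/fitness history up to step $t$ and on $z$ not having been queried, the offspring produced in step $t$ equals $z$ with probability at most $1/\binom{n}{d(x^{(t)},z)}$, simply because a unary unbiased operator that flips $j$ bits hits a fixed point at distance $j$ with probability $1/\binom{n}{j}$. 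Elitism makes the current fitness non-decreasing, so either the initial point already has positive fitness, in which case the current point keeps positive fitness forever and therefore $d(x^{(t)},z)\in\{k+1,\dots,n-k-1\}$ and $\binom{n}{d(x^{(t)},z)}\ge\binom{n}{k+1}$ at every step; or the initial point has fitness $0$. In the first case — which, for $k\le n/2-c\sqrt{n\log n}$, has probability $1-e^{-\Omega(n)}$ by a Chernoff bound on $d(x^{(0)},z)\sim\Bin(n,1/2)$ — a union bound over the first $T=\binom{n}{k+1}/2$ steps shows $z$ is hit with probability at most $1/2$, which gives the $\Omega(\binom{n}{k+1})$ bound for the expected runtime and for the $p$-Monte Carlo complexity for every constant $p<1$.

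\textbf{Lower bound, the plateau case (the main obstacle).}
It remains to treat the event that the algorithm is confined to the fitness-$0$ plateau, which is non-negligible only when $k$ is within $O(\sqrt{n\log n})$ of $n/2$ and hence $\binom{n}{k+1}=\Theta(2^n/\sqrt n)$. A fitness-$0$ query at $x$ reveals only that $d(x,z)\notin\{0\}\cup\{k+1,\dots,n-k-1\}$, so conditioned on the history $z$ is still uniform over an exponentially large set $Z_t$ of consistent targets, and averaging the core estimate over $Z_t$ bounds the per-step hitting probability by $1/|Z_t|$. The genuinely delicate part, which I expect to be the main obstacle, is a random-walk analysis of the current search point on this plateau: the distance $d(x^{(t)},z)$ evolves under a drift toward $n/2$, and one needs to show that with overwhelming probability the algorithm either gets absorbed at a positive-fitness point (distance in $\{k+1,\dots,n-k-1\}$) after only $\mathrm{poly}(n)$ steps — reducing to the previous case — or never comes within distance $o(n)$ of $z$, so that neither a direct hit on $z$ nor a fast enough shrinking of $Z_t$ can overcome the $1/\binom{n}{k+1}$ per-step rate before time $\binom{n}{k+1}/2$. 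Combining the two cases yields $\Omega(\binom{n}{k+1})$ in all regimes, and the asymptotic statements follow from $\binom{n}{k+1}\ge (n/(k+1))^{k+1}$ (superpolynomial for $k=\omega(1)$) and $\binom{n}{k+1}=2^{\Theta(n)}$ for $k=\Omega(n)$.
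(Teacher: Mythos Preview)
Your upper bound is essentially the paper's, but the paper's algorithm also resamples uniformly with constant probability; this guarantees a positive-fitness point in $O(\sqrt{n})$ expected time (a uniform sample has $\onemax$ value exactly $n/2$ with probability $\Theta(1/\sqrt{n})$, and $n/2$ is visible for every $k\le n/2-1$). Your ``short separate check'' for an invisible start is not so short: on the fitness-$0$ plateau there is no selection pressure, so the $1$-bit and $(k{+}1)$-bit flips perform an unguided random walk, and for $k=n/2-1$ you would need a hitting-time argument for reaching the single visible level $d=n/2$. This is fixable, but the paper's resampling trick avoids it.

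For the lower bound your ``main case'' is exactly the paper's core step (stated there as a separate theorem: once the algorithm holds a non-optimal point of positive fitness, every subsequent query hits $z$ with probability at most $1/\binom{n}{k+1}$). The genuine gap is your ``plateau case''. Your probability claim $1-e^{-\Omega(n)}$ is too strong for $k=n/2-c\sqrt{n\log n}$ (Chernoff only gives $1-n^{-\Theta(1)}$ there), and more importantly the random-walk sketch is left open precisely in the regime where it matters. The paper bypasses this entirely with a short combinatorial trick: take a partition of $\{0,1\}^n$ into $2^n/n$ sets $S_1,\dots,S_{2^n/n}$ of size $n$ whose points are pairwise at Hamming distance $n/2$ (cosets of the Hadamard code). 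For every target $z\in S_i$ and every $k\le n/2-1$, \emph{all} points of $S_i$ have positive fitness. Now run the deterministic algorithm (with full history, to enable Yao) and look at its fixed ``all-zero-feedback'' query sequence $y_0,y_1,\dots$; let $t_0(i)$ be the first index with $y_{t_0(i)}\in S_i$. For $z\in S_i$ the first positive-fitness query occurs at some $\tau(z)\le t_0(i)$; if $\tau(z)<t_0(i)$ then $y_{\tau(z)}\notin S_i$ and hence $y_{\tau(z)}\neq z$, while if $\tau(z)=t_0(i)$ then $y_{\tau(z)}=y_{t_0(i)}$ is a fixed element of $S_i$, equal to the uniformly random $z\in S_i$ with probability $1/n$. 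Hence with probability at least $1-1/n$ the first positive-fitness query is a non-optimal point, uniformly in $k$, and your ``main case'' argument then finishes the proof with no plateau analysis at all.
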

\begin{proof}[Proof of the Upper Bound in Theorem~\ref{thm:jump}]
For any constant $k$ the upper bound is achieved by the simple \oea~\citep{DrosteJW02}. For general $k$, consider the algorithm $A$ that produces an offspring as follows. With probability $1/3$ the offspring is a search point uniformly at random from $\{0,1\}^n$, with probability $1/3$ the algorithm flips exactly one bit (uniformly at random), and with probability $1/3$ it flips exactly $k+1$ bits (also uniformly at random). The offspring is accepted if its fitness is at least the fitness of the current search point. This algorithm finds a point of positive fitness in expected time $O(\sqrt{n})$ since it produces random search points with probability $1/3$ and each such uniform sample has \onemax value $n/2$ with probability $\Theta(1/\sqrt{n})$. Then with high probability it increases the fitness to $n-k-1$ in at most $O(n\log n)$ steps by one-bit flips (and possibly $(k+1)$-bit flips). Afterwards, since there are $\binom{n}{k+1}$ search points in distance $k+1$, the algorithm needs in expectation at most $3\binom{n}{k+1}$ steps to find the optimum. This proves the upper bound for the Las Vegas complexity, which in turn implies the upper bound for the Monte Carlo complexity. 
\end{proof}

It remains to prove the lower bound. For $k=0$ it follows from~\citep[Theorem 6]{LehreW12}. For general $k$, as an intermediate step, we show the following general result.

\begin{theorem}
\label{thm:distance}
Assume that $f$ is a function with a unique global maximum $\xopt$, and assume further that a unary unbiased (1+1) elitist black-box algorithm $A$ is currently at a search point $x \neq \xopt$. Let $0<d \leq n/2$, and let $\dist$ denote the Hamming distance. Let
$$S:=\{x' \neq \xopt\mid \dist\{x' , \xopt\} \leq d \text{ or } \dist\{x' , \xopt\} \geq n-d\}$$
be the distance-$d$ neighborhood of $\xopt$ and of its bitwise complement $\overline{\xopt}$. If all search point in $S$ have fitness less than $f(x)$, then $A$ needs in expectation at least $\binom{n}{d+1}$ additional queries to find the optimum. Moreover, for every $\alpha \geq 0$, the probability that it needs at most $\alpha \binom{n}{d+1} $ additional queries is at most $\alpha$.
\end{theorem}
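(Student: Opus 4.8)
The plan is to show that, under the stated hypotheses, the algorithm can reach $\xopt$ only by a ``long jump'' of Hamming length strictly greater than $d$, and that a unary unbiased operator performs such a jump onto the \emph{specific} target $\xopt$ with probability at most $\binom{n}{d+1}^{-1}$ per step; an elementary waiting-time estimate then yields both assertions.

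First I would record two structural facts. \emph{(i)} Since the algorithm is elitist with population size $1$, the fitness of its current search point never decreases, hence stays $\ge f(x)$ throughout; as every point of $S$ has fitness strictly below $f(x)$, the current search point never lies in $S$. The current search point is moreover always a previously queried point, so as long as $\xopt$ has not been queried it differs from $\xopt$. Combining these, at the start of every step before $\xopt$ has been queried the current search point $y$ satisfies $y\notin S\cup\{\xopt\}$, i.e.\ $d<\dist(y,\xopt)<n-d$; in particular (taking $y=x$) this range of distances is non-empty, so $d+1\le n-d-1$. \emph{(ii)} Recall the standard description of a unary unbiased variation operator: an offspring of $y$ is obtained by first drawing a radius $r$ from some distribution $(p_0,\dots,p_n)$ on $\{0,\dots,n\}$ and then returning a uniformly random point at Hamming distance $r$ from $y$; hence the offspring equals $\xopt$ with probability $p_{r}\binom{n}{r}^{-1}\le\binom{n}{\dist(y,\xopt)}^{-1}$, where $r=\dist(y,\xopt)$.

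Now I would combine the two. For $d<r<n-d$, symmetry and unimodality of the binomial coefficients give $\binom{n}{r}\ge\binom{n}{d+1}$ (the minimum of $k\mapsto\binom{n}{k}$ over the integers strictly between $d$ and $n-d$ is $\binom{n}{d+1}=\binom{n}{n-d-1}$). Therefore, conditioned on any history in which $\xopt$ has not yet been queried, the probability that the offspring produced in the next step equals $\xopt$ is at most $q:=\binom{n}{d+1}^{-1}$: by \emph{(i)} the current parent $y$ then has $d<\dist(y,\xopt)<n-d$, and by \emph{(ii)} the offspring hits $\xopt$ with probability at most $\binom{n}{\dist(y,\xopt)}^{-1}\le q$. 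Let $N$ be the number of further queries until $A$ queries $\xopt$. The per-step bound gives $\Pr[N\le m]\le mq$ and $\Pr[N>m]\ge(1-q)^m$ for every integer $m\ge 0$. The first inequality gives, for real $\alpha\ge 0$, $\Pr\!\big[N\le\alpha\binom{n}{d+1}\big]=\Pr\!\big[N\le\lfloor\alpha\binom{n}{d+1}\rfloor\big]\le\lfloor\alpha\binom{n}{d+1}\rfloor\,q\le\alpha$, the tail claim. The second shows $N$ stochastically dominates a geometric variable with parameter $q$, so $\E[N]\ge\sum_{m\ge 0}(1-q)^m=1/q=\binom{n}{d+1}$, the expectation claim.

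Nothing here is genuinely hard; the only external ingredient is the radius-distribution characterization of unary unbiased operators (standard). The step deserving the most care is structural fact \emph{(i)}: one must argue that elitism forbids the algorithm from ever standing on a point of $S$ and from carrying $\xopt$ in its memory before it has queried it, so that the mutation that finally produces $\xopt$ provably issues from a parent at Hamming distance strictly between $d$ and $n-d$ — this is exactly where the combination of elitism, the memory restriction, and the assumed fitness structure is used.
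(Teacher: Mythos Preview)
Your proof is correct and follows essentially the same route as the paper's: use elitism to trap the current search point at Hamming distance $d'\in[d+1,n-d-1]$ from $\xopt$, use unbiasedness to bound the per-step success probability by $1/\binom{n}{d'}\le 1/\binom{n}{d+1}$, and finish with a union bound/geometric waiting time. The only cosmetic difference is that you invoke the radius-distribution characterization of unary unbiased operators, whereas the paper argues the same bound directly from symmetry (if $\xopt$ is hit with probability $p$, so is every other point at distance $d'$, whence $p\binom{n}{d'}\le 1$).
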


\begin{proof}
Since $A$ is elitist, it can never accept a point in $S$. Therefore, in every subsequent step before finding the optimum, it will be in some search point $y$ with distance $d' \in [d+1, n-d-1]$ from the optimum. If an unbiased mutation has some probability $p$ to produce $\xopt$ from $y$, then every other search point in distance $d'$ has also probability $p$ to be the offspring. In particular, since there are $\binom{n}{d'}$ such points, $p\binom{n}{d'}$ equals the probability that the offspring has distance $d'$ of $y$, which is at most $1$. Hence, at any point the probability to sample the optimum in the next step is at most $1/\binom{n}{d'} \leq 1/\binom{n}{d+1}$. Therefore, $A$ needs in expectation at least $\binom{n}{d+1}$ steps to find the optimum. Moreover, by the union bound the probability that $A$ needs less than $\alpha \binom{n}{d+1}$ steps is at most $\alpha$.
\end{proof}


\begin{proof}[Proof for the Lower Bound in Theorem~\ref{thm:jump}]
Assume for simplicity that $n$ is a power of $2$. In order to apply Yao's Principle (Lemma~\ref{lem:Yao}), we allow the algorithm to remember its complete search history. We then regard a deterministic algorithm $A$ on a uniformly chosen $\jump_k$ function. That is, the target string $z$ of the \onemax function underlying the $\jump_k$ function is chosen from $\{0,1\}^n$ uniformly at random.

It is known that we can partition the hypercube $\{0,1\}^n$ into $2^n/n$ sets $S_1, \ldots, S_{2^n/n}$ of size~$n$ each such that for each $i \in [2^n/n]$ the pairwise distance between any two points in $S_i$ is exactly $n/2$ (e.g., the cosets of the Hadamard code as in~\cite{AroraB09}). 

Let $i$ be the index of the set containing the target string~$z$, i.e., $z \in S_i:=\{s^1, \ldots, s^n\}$. Regardless of the jump-size~$k$, each search point in $S_i$ has positive fitness. Indeed, for each $j$ either we have $s^j=z$ (in which case the fitness of $s^j$ equals $n$) or the distance and thus the fitness of $s^j$ to $z$ equals $n/2$. Since $z$ is chosen uniformly at random, the probability that the first search point of set $S_i$ that the algorithm $A$ queries is the target string~$z$ equals $1/n$. This shows that with probability at least $1-1/n$ the first search point with positive fitness that $A$ queries is not the optimum. But then Theorem~\ref{thm:distance} tells us that the algorithm needs at least $\binom{n}{k+1}$ additional steps in expectation, and at least $\alpha\binom{n}{k+1}$ with probability at least $\alpha$. This proves the claim.
\hspace*{\fill}
\end{proof}

\section{Conclusions}
\label{sec:conclusions}
We have introduced elitist black-box complexity as a tool to analyze the performance of search heuristics with elitist selection rules. Several examples provide evidence that the elitist black-box complexities can give a much more realistic estimation of the expected runtime of typical search heuristics. We have also seen that some unrealistically low black-box complexities in the unbiased model disappear when elitist selection is enforced. 

We have also introduced the concept of Monte Carlo black-box complexities and have brought to the attention of the community the fact that these can be significantly lower than the previously regarded Las Vegas complexities. In addition, it can also be significantly easier to derive bounds for the Monte Carlo black-box complexities, see~\citep{DoerrL15OM}. Both complexity notions correspond to runtime analysis statements often seen in the evolutionary computation literature and should thus co-exist in black-box complexity research.

While we regard in this work toy-problems, it would be interesting to analyze the influence of elitist selection on the performance of algorithms in more challenging optimization problems. Our findings enliven the question for which problems non-elitist selection like tournament or so-called fitness-dependent selection can be beneficial, initial findings for which can be found in~\citep{OlivetoZ15, FriedrichOSW09}. Negative examples are presented in~\citep{OlivetoW14,HappJKN08,NeumannOW09}. 

\subsection*{Acknowledgments}
This research benefited from the support 
of the ``FMJH Program Gaspard Monge in optimization and operation research'', 
and from the support to this program from EDF.
}

\small

\bibliographystyle{apalike}

\end{document}